\newtheorem{theorem}{Theorem}
\newtheorem{problem}{Problem}
\newcommand{\add}[1]{%
    \ifthenelse{\boolean{showchanges}}%
        {\textcolor{blue}{#1}}
        {#1\relax}
}
\definecolor{lime}{HTML}{A6CE39}
\DeclareRobustCommand{\orcidicon}{%
    \begin{tikzpicture}
    \draw[lime, fill=lime] (0,0) 
    circle [radius=0.16] 
    node[white] {{\fontfamily{qag}\selectfont \tiny ID}};    \draw[white, fill=white] (-0.0625,0.095) 
    circle [radius=0.007];    \end{tikzpicture}
    \hspace{-2mm}}
\xdef\csname orcid\x\endcsname{\noexpand\href{https://orcid.org/\csname orcidauthor\x\endcsname}{\noexpand\orcidicon}}
\begin{document}

\title{RadioDiff-Flux: Efficient Radio Map Construction via Generative Denoise Diffusion Model Trajectory Midpoint Reuse}
\author{
Xiucheng Wang,\orcidA{} ~\IEEEmembership{Graduate Student Member,~IEEE,}
Peilin Zheng, \orcidB{} ~\IEEEmembership{Graduate Student Member,~IEEE,}
Honggang Jia, \orcidC{} ~\IEEEmembership{Graduate Student Member,~IEEE,}
Nan Cheng,\orcidD{}~\IEEEmembership{Senior Member,~IEEE,}
Ruijin Sun,\orcidE{}~\IEEEmembership{Member,~IEEE,}\\
Conghao Zhou,\orcidF{}~\IEEEmembership{Member,~IEEE,}
Xuemin (Sherman) Shen,\orcidG{}~\IEEEmembership{Fellow,~IEEE}

\thanks{ }
\thanks{
\par This work was supported by the National Key Research and Development Program of China (2024YFB907500).
\par Xiucheng Wang, Peilin Zheng, Honggang Jia, Nan Cheng, and Ruijin Sun are with the State Key Laboratory of ISN and School of Telecommunications Engineering, Xidian University, Xi’an 710071, China. (e-mail: \{xcwang\_1, plzheng, jiahg\}@stu.xidian.edu.cn; dr.nan.cheng@ieee.org; sunruijin,@xidian.edu.cn; conghao.zhou@ieee.org). (Peilin Zheng and Honggang Jia contribute equally.) \textit{Nan Cheng is the corresponding author}.
\par Xuemin (Sherman) Shen is with the Department of Electrical and Computer Engineering, University of Waterloo, Waterloo, N2L 3G1, Canada (e-mail: sshen@uwaterloo.ca).
}

}

    \maketitle

\IEEEdisplaynontitleabstractindextext

\IEEEpeerreviewmaketitle

\begin{abstract}
Accurate radio map (RM) construction is essential to enabling environment-aware and adaptive wireless communication. However, in future 6G scenarios characterized by high-speed network entities and fast-changing environments, it is very challenging to meet real-time requirements. Although generative diffusion models (DMs) can achieve state-of-the-art accuracy with second-level delay, their iterative nature leads to prohibitive inference latency in delay-sensitive scenarios. In this paper, by uncovering a key structural property of diffusion processes: the latent midpoints remain highly consistent across semantically similar scenes, we propose RadioDiff-Flux, a novel two-stage latent diffusion framework that decouples static environmental modeling from dynamic refinement, enabling the reuse of precomputed midpoints to bypass redundant denoising. In particular, the first stage generates a coarse latent representation using only static scene features, which can be cached and shared across similar scenarios. The second stage adapts this representation to dynamic conditions and transmitter locations using a pre-trained model, thereby avoiding repeated early-stage computation. The proposed RadioDiff-Flux significantly reduces inference time while preserving fidelity. Experiment results show that RadioDiff-Flux can achieve up to 50× acceleration with less than 0.15\% accuracy loss, demonstrating its practical utility for fast, scalable RM generation in future 6G networks.
\end{abstract}

\begin{IEEEkeywords}
Radio map, generative artificial intelligence, diffusion model, midpoint reuse.
\end{IEEEkeywords}

\section{Introduction}
\begin{figure}[t]
    \centering
    \includegraphics[width=0.95\linewidth]{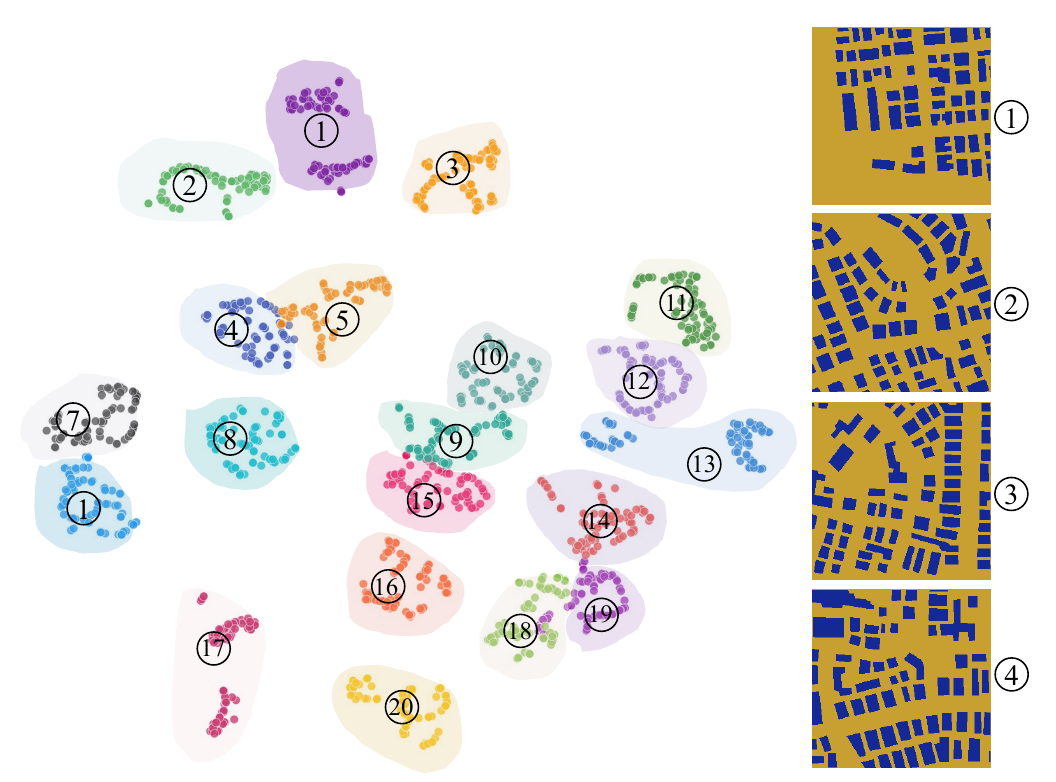}
    \caption{The illustration of the similarity of latent variables for RMs.}\label{fig-cluster}
    \vspace{-15pt}
\end{figure}
The increasing demand for efficient and adaptive channel estimation methods in 6G networks has shifted the focus from traditional pilot signal-based measurements to computational approaches \cite{han2020channel,ma2022satellite,6g}. This is primarily driven by the need to estimate channel characteristics in large-dimensional environments, which are common in 6G, as well as the integration of passive devices such as Intelligent Reflective Surfaces (IRS) \cite{wang2022massive,Jiang2021a,ozdogan2019intelligent}. Additionally, the pre-planning of movement paths for mobile wireless access nodes, such as drones and satellites, introduces further complexity in channel estimation, as these nodes must account for their dynamic positions before reaching target areas \cite{cheng2019space,wang2022joint}. In response to these challenges, Radio Maps (RMs) \cite{levie2021radiounet} and Channel Knowledge Maps (CKMs) \cite{zeng2024tutorial} have emerged as important tools for visually representing the spatial distribution of wireless channel features via pre-computation. Although these methods are effective in capturing the accuracy of spatial distributions, they often fail to address the growing need for efficient construction and real-time adaptability, especially when environmental factors or wireless transmitter parameters change dynamically \cite{dang2020should,zeng2024tutorial}. In 6G networks, rapid shifts in user distribution, environmental conditions, and personalized service demands create significant temporal-spatial variations in service requirements \cite{shen2023toward}. This necessitates the ability for network managers to rapidly adapt service strategies in real-time, in order to maintain service quality and efficiency \cite{zeng2021toward}. Traditional pre-computed RMs and CKMs, however, struggle to offer timely updates or support on-demand services in such dynamic environments, as they are often limited by their inability to respond quickly to evolving conditions \cite{zeng2024tutorial}. This research highlights the critical need for rapid RM inference, proposing an innovative solution where RMs can be quickly reconstructed following environmental or base station (BS) location changes, leveraging pre-calculated RMs or their intermediate variables. This capability aligns with the requirements of 6G networks, offering the agility and dynamism necessary to meet the challenges of next-generation wireless systems.

\begin{figure*}[t]
\captionsetup{font={small}, skip=16pt}
\centering
    \subfigure[The similarity of diffusion midpoint variables.]
    {
     \centering
           \includegraphics[width=0.95\linewidth]{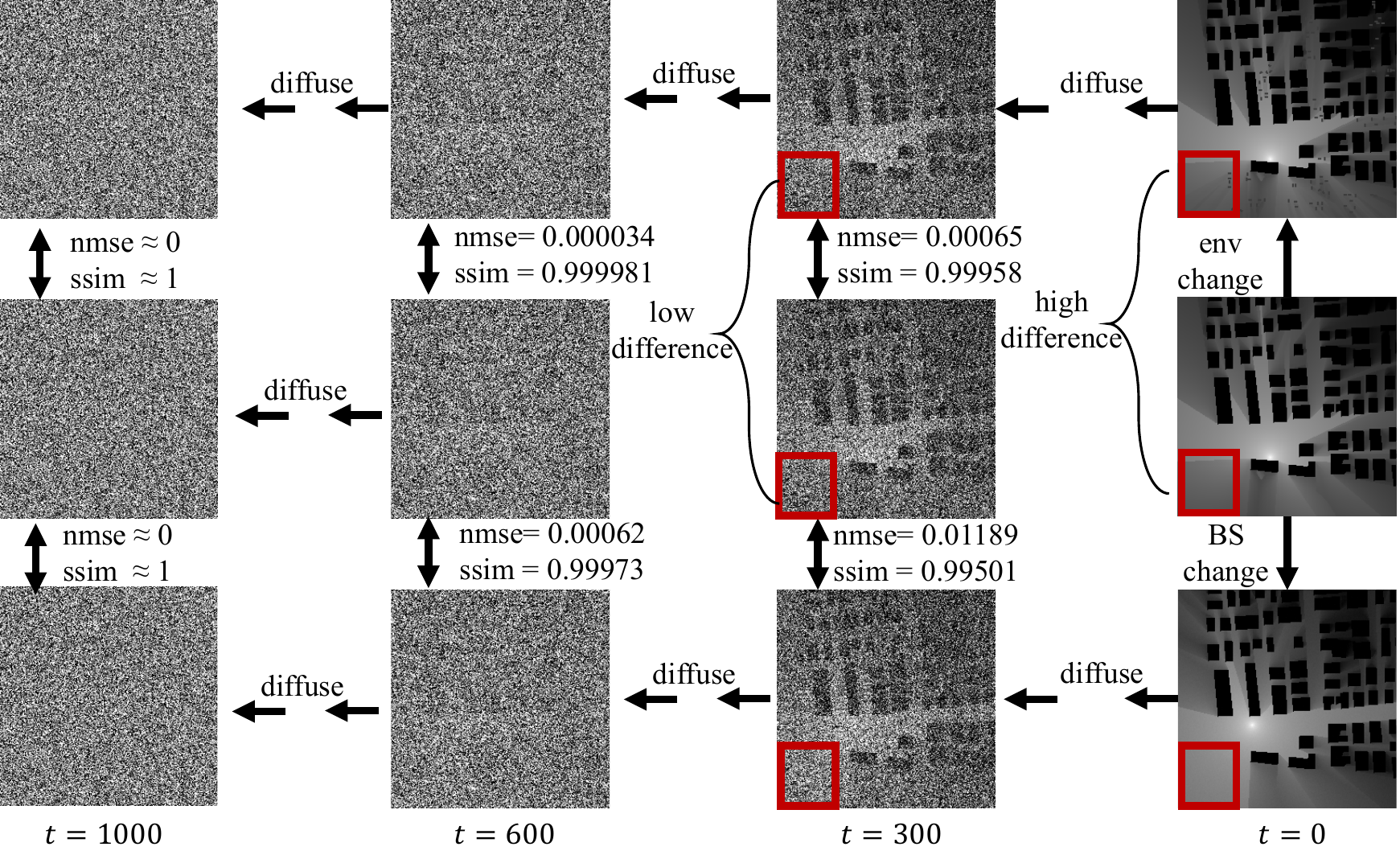}\label{diffusion-similar}
    }
    \subfigure[The metric similarity of diffusion midpoint variables between similar environments.]
    {
     \centering
           \includegraphics[width=0.475\linewidth]{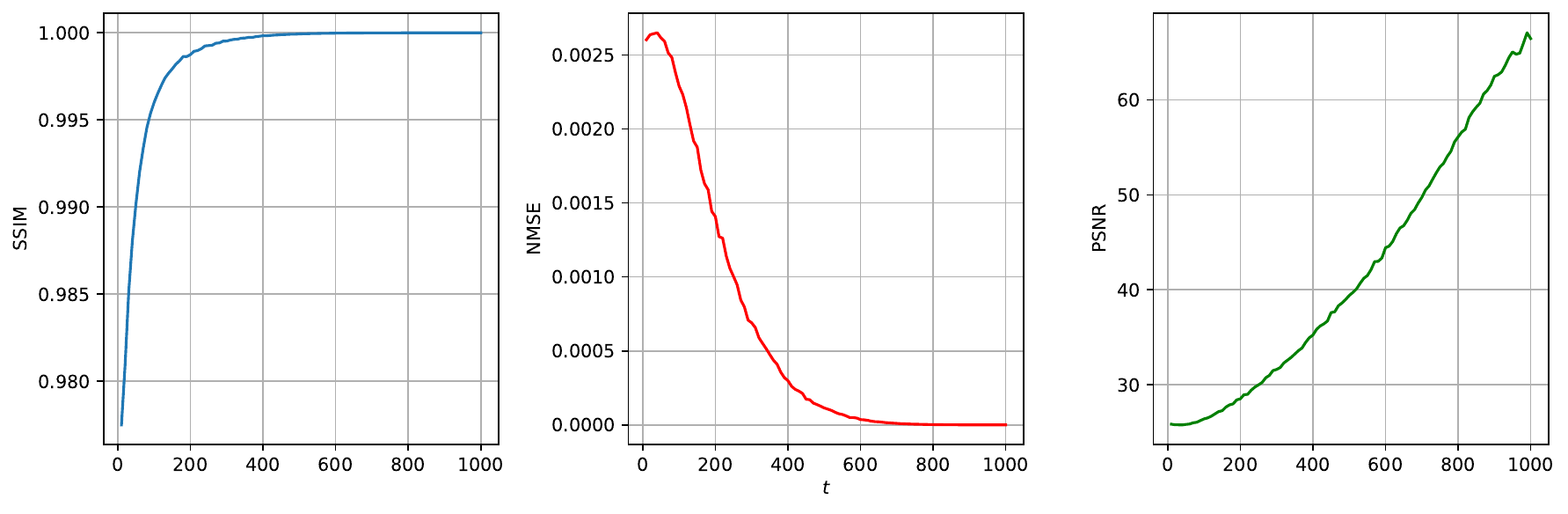}\label{fig-metric-env}
    }
    \subfigure[The metric similarity of diffusion midpoint variables between different BS locations.]
    {
     \centering
           \includegraphics[width=0.475\linewidth]{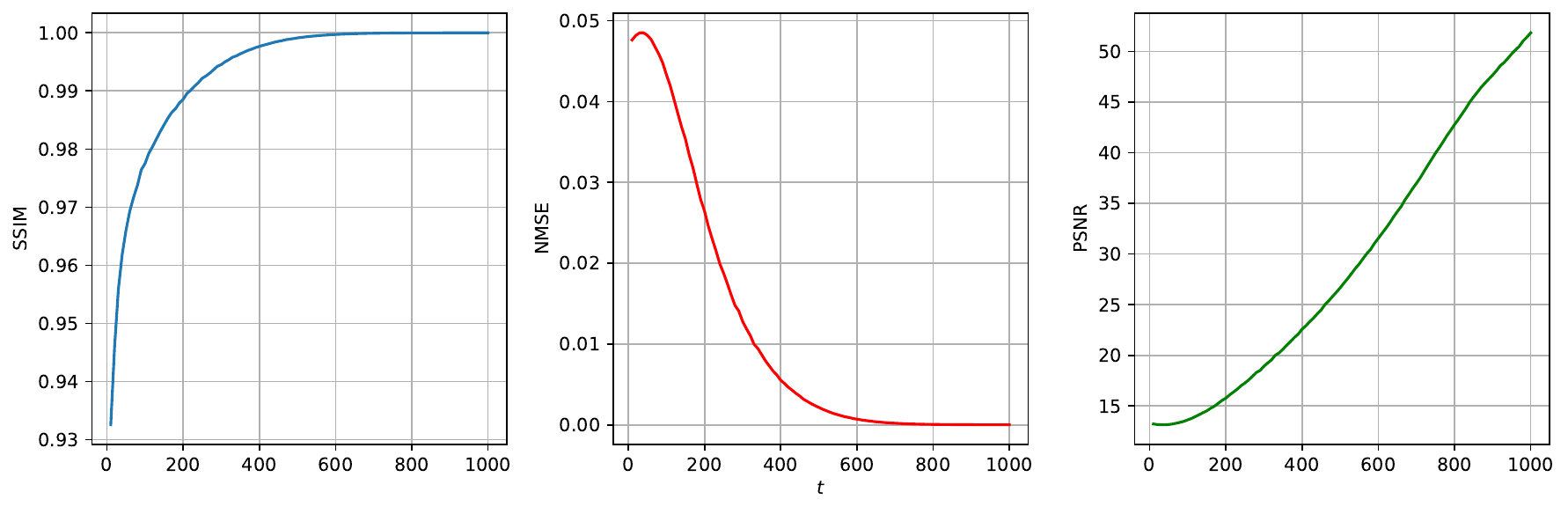}\label{fig-metric-bs}
    }
\caption{\added{The illustration of motivation for latent denoise reuse. (a) Visualizes the convergence of diffusion trajectories for semantically similar RMs. (b) and (c) quantitatively measure the similarity of latent variables over the diffusion process. The x-axis represents the diffusion timestep $t$, and the y-axis represents the NMSE between the latent variables of two different scenarios.}}
\label{fig_motivation}
\end{figure*}

Existing technologies can be divided into two paradigms: physical-driven \cite{deschamps1972ray,deschamps1972ray,irt} and data-driven methods \cite{levie2021radiounet,zhang2023rme,wang2024radiodiff}. However, both of them face fundamental limitations in dynamic scenarios. Physical-driven methods, such as ray tracing, simulate the propagation of electromagnetic waves by solving Maxwell's equations \cite{zhou2017electromagnetic}. Although they can achieve RM modeling with centimeter-level accuracy, their computational complexity is exponentially related to the size of the scene. Building a 100-meter resolution RM often requires tens of minutes of server-level computing power, and any slight environmental changes, such as vehicle movement, can cause global changes in the path of electromagnetic waves, forcing a complete recalculation \cite{oh2004mimo}. The rigid computing architecture of such methods obviously cannot adapt to the second-level RM update requirements in 6G scenarios. Data-driven methods attempt to break through the efficiency bottleneck by learning environmental feature mappings through neural networks. However, traditional discriminative models are good at regressing channel parameters from local features, they have difficulty in generating spatially coherent global RMs \cite{levie2021radiounet,li2022radionet}. Although generative adversarial networks (GANs) have the ability to generate data, their reliability in actual deployment is insufficient due to mode collapse and training instability \cite{zhang2023rme}. In recent years, diffusion models (DMs) have made significant progress in the task of RM construction with their progressive generation mechanism. The accuracy of DM-based methods can rival that of ray tracing \cite{wang2024radiodiff,wang2025radiodiff}. However, the iterative denoising process of DM requires thousands of neural network inferences, resulting in a generation delay of several seconds for a single RM, which still makes it difficult to support the real-time requirements of high-dynamic scenes \cite{LDM}. Moreover, the ``zero memory" generation mode of traditional DM completely ignores the temporal and spatial correlation in the continuous evolution of scenes. For example, when a drone moves along a trajectory, DM needs to perform a complete denoising process from scratch for each new location, while the similarity of the propagation laws implicitly existing between RMs of adjacent locations is not effectively utilized. This redundant calculation is not only inefficient, but also likely to introduce inter-frame jitter due to random noise initialization, which can destroy temporal consistency - this is particularly fatal for applications that require continuous RM sequences.

To address the limitations of existing RM construction methods in dynamic environments, our study reveals a key empirical finding, as shown in Fig.~\ref{fig-cluster}, the intermediate latent variables in the diffusion process exhibit strong similarity across scenarios with comparable environmental characteristics. For instance, when base station locations are slightly adjusted within the same building layout, the diffusion trajectories show highly consistent latent representations in the middle stages of denoising, even though the final RMs differ significantly. This observation indicates that the intermediate states primarily encode stable, scene-invariant features such as architectural structure and material properties, while later stages are responsible for refining scene-specific details like antenna radiation patterns and dynamic obstacles. This insight leads us to propose RadioDiff-Flux, a two-stage implicit diffusion framework designed to reuse the intermediate states, referred to as midpoints, within the generative process. By decoupling the modeling of static environmental features from the refinement of dynamic or transmitter-specific elements, our approach significantly enhances inference efficiency while preserving spatial and temporal consistency. The resulting framework enables scalable and low-latency RM generation, particularly well-suited for dynamic 6G scenarios. The main contributions of this paper are summarized as follows.
\begin{enumerate}
    \item We conduct a detailed analysis of the latent diffusion process and uncover that RMs generated from different base station positions or dynamic variations within the same static environment share highly similar diffusion midpoints. These midpoints capture stable environmental semantics, enabling their reuse to significantly reduce redundant inference in dynamic scenarios.
    \item To support this observation, we provide a theoretical analysis based on KL divergence, showing that RMs with similar structures exhibit closely aligned denoising trajectories. This offers a rigorous foundation for the feasibility and effectiveness of midpoint reuse.
    \item Leveraging these insights, we propose two implementations: vanilla midpoint reuse, which enables zero-cost adaptation using cached midpoints from a pre-trained model; and RadioDiff-Flux, a two-stage framework that decouples static and dynamic inference for improved accuracy and efficiency. Both designs facilitate rapid RM updates while preserving generative fidelity.
    \item Extensive experimental results demonstrate the effectiveness of our approach. In dynamic scenarios, the proposed midpoint reuse strategies achieve over 50× acceleration in inference speed, with less than 0.12\% degradation in RM construction accuracy, significantly advancing the practicality of diffusion-based RM generation for real-time and mobility-aware wireless systems.
\end{enumerate}

\section{Preliminary}
\begin{figure*}
    \centering
    \includegraphics[width=0.95\linewidth]{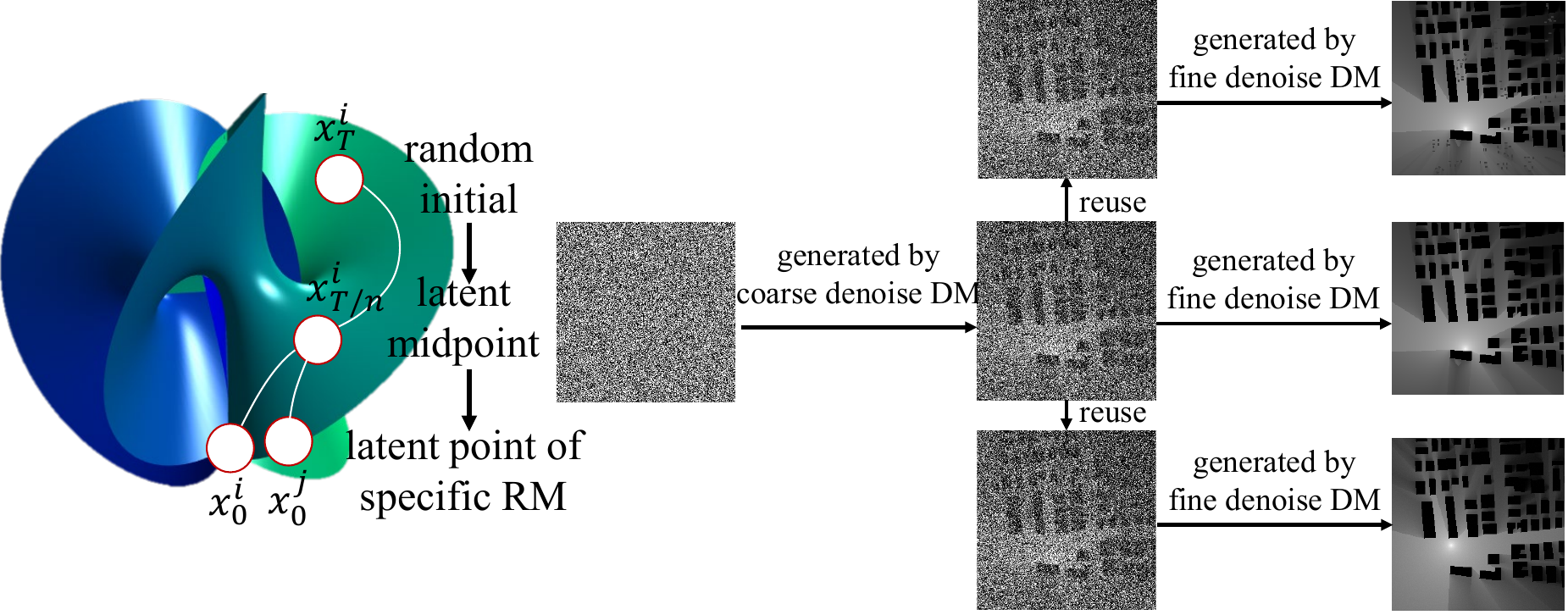}
    \caption{The illustration of latent midpoint reuse for the RM generation framework.}
    \label{fig-reuse}
\end{figure*}

DMs particularly denoising diffusion probabilistic models (DDPM) \cite{ho2020denoising,song2020denoising}, have emerged as a powerful class of generative models for various data synthesis tasks, including image generation \cite{avrahami2022blended}, denoising \cite{LDM}, and even in the context of wireless channel estimation \cite{wang2024radiodiff,11083758}. These models operate by progressively adding noise to data in a forward process and then learning to reverse the noise in a reverse denoising process. The main advantage of diffusion models lies in their ability to produce high-quality data through a gradual refinement process, making them well-suited for tasks requiring precise generation of complex data distributions, such as radio map construction \cite{jia2025rmdm, wang2025radiodiffinverse}.

The forward process in DDPM involves a Markov chain of length T, where at each step, Gaussian noise is progressively added to the original data. The transition from the clean data $x_0$ to the noisy data $x_T$ is defined by a series of conditional distributions, where at each step $t$, the data is perturbed by adding noise with variance controlled by a schedule. Formally, the forward process is modeled as follows \cite{ho2020denoising}.
\begin{align}
q(x_t | x_{t-1}) = \mathcal{N}(x_t; \sqrt{1 - \beta_t} \cdot x_{t-1}, \beta_t \cdot I),
\end{align}
where $\beta_t$ is a variance schedule that controls the amount of noise added at each step, $\mathcal{N}(\cdot)$ represents a Gaussian distribution, and $I$ is the identity matrix. The variable $x_t$ denotes the noisy version of the data at time step $t$, and the process continues until $t = T$, where the data $x_T$ is essentially pure noise. To simplify, the total forward process can be represented as follows.
\begin{align}
q(x_T | x_0) = \mathcal{N}(x_T; \sqrt{\bar{\alpha}_T} \cdot x_0, (1 - \bar{\alpha}_T) \cdot I),
\end{align}
where $\alpha_t = 1 - \beta_t$, and $\bar{\alpha}_T = \prod_{t=1}^{T} \alpha_t$. This cumulative process effectively maps the original data $x_0$ to a noisy sample $x_T$.

Once the data has been diffused to noise, the reverse process is learned, aiming to recover the original data from the noisy version. The reverse process is essentially the denoising step, where each noisy image $x_t$ is mapped back to the cleaner version $x_{t-1}$. The reverse dynamics are governed by the distribution as follows.
\begin{align}
p_\theta(x_{t-1} | x_t) = \mathcal{N}(x_{t-1}; \mu_\theta(x_t, t), \sigma_t^2 I),
\end{align}
where $\mu_\theta(x_t, t)$ is the mean of the distribution, predicted by a neural network, and $\sigma_t^2$ is the variance at step $t$. The model is trained to learn the denoising function $\mu_\theta(x_t, t)$ through the following objective.
\begin{align}
\mathcal{L} = \mathbb{E}_q \left[ \| \epsilon - \epsilon_\theta(x_t, t) \|^2 \right],
\end{align}
where $\epsilon$ is the noise added in the forward process, and $\epsilon_\theta(x_t, t)$ is the model’s predicted noise. The network learns to predict the noise that was added at each step of the diffusion, and the loss is minimized by comparing the predicted noise to the true noise. By reversing this process iteratively, starting from the noise $x_T$, the model progressively recovers the original data, $x_0$.

According to \cite{song2020score}, the diffusion process of latent variables in the generative model can be equivalently expressed by a stochastic differential equation (SDE) as follows \cite{huang2024decoupled}.
\begin{align}
    \mathrm{d} \bm{z}_t &= f_t \bm{z}_t \,\mathrm{d}t + g_t \,\mathrm{d} \bm{\epsilon}_t,\label{ddm-sde}\\
    f_t &= \frac{\mathrm{d} \log \gamma_t}{\mathrm{d} t}, \\
    g_t^2 &= \frac{\mathrm{d} \delta_t^2}{\mathrm{d} t} - 2 f_t \delta_t^2,
\end{align}
where $\bm{z}_t$ is the noisy latent representation at time $t$, $\bm{\epsilon}_t$ is standard Brownian noise, and $f_t$ and $g_t$ denote the drift and diffusion coefficients, respectively. The reverse process, which recovers $\bm{z}_0$ from $\bm{z}_t$, follows:
\begin{align}
    \mathrm{d} \bm{z}_t = \left[f_t \bm{z}_t - g_t^2 \nabla_{\bm{x}} \log q(\bm{z}_t)\right] \mathrm{d}t + g_t \mathrm{d} \overline{\bm{\epsilon}}_t,\label{ddm-reverse}
\end{align}
where $\overline{\bm{\epsilon}}_t$ is a Gaussian noise term from the time-reversed diffusion.

To enhance interpretability and modularity, we adopt a decoupled diffusion formulation that separates the denoising process into an additive structure:
\begin{align}
    \bm{z}_t &= \bm{z}_0 + \int_0^t \bm{f}_t \,\mathrm{d}t + \int_0^t \mathrm{d} \bm{\epsilon}_t, \label{decoupled-noise} \\
    \bm{z}_0 &+ \int_0^t \bm{f}_t \,\mathrm{d}t = \bm{0}, \label{decoupled-constraint}
\end{align}
where the first integral describes deterministic signal decay, and the second represents accumulated noise. Assuming the diffusion process is isotropic, the conditional distribution of $\bm{z}_t$ given $\bm{z}_0$ simplifies to:
\begin{align}
    q(\bm{z}_t|\bm{z}_0) = \mathcal{N}\left(\bm{z}_0 + \int_0^t \bm{f}_t \,\mathrm{d}t,\, t\bm{I}\right).\label{ddm-forward}
\end{align}
From this, we derive the reverse sampling distribution over a discrete step size $\Delta t$, which is essential for practical inference:
\begin{align}
q\left(\bm{z}_{t-\Delta t} \mid \bm{z}_t, \bm{z}_0\right)  &=\mathcal{N}\left(\bm{z}_{t} +\int_t^{t-\Delta t} \bm{f}_t \mathrm{~d} t\right. \notag\\
& \left.\qquad\qquad-\frac{\Delta t}{\sqrt{t}} \bm{\epsilon}, \frac{\Delta t(t-\Delta t)}{t} \bm{I}\right).\label{ddm-reverse}
\end{align}
The above DDM architecture is also used by the SOTA NN-based RM construction method in \cite{wang2024radiodiff}.

\section{System Model and Problem Formulation}
In this work, we consider a RM construction scenario over a discretized two-dimensional spatial region, represented as an $N \times N$ uniform grid. Each cell in the grid is assumed to be sufficiently small such that the pathloss within a cell remains approximately invariant. Consequently, the RM is defined as a matrix $\bm{P} \in \mathbb{R}^{N \times N}$, where each element $P(i,j)$ denotes the pathloss at location $(i,j)$. A single base station (BS) equipped with a dipole antenna is deployed as the sole radiation source in the environment. Its position is denoted by $r = \langle d_x, d_y, d_z \rangle$, where $(d_x, d_y)$ is the horizontal location and $d_z$ is the BS height. The environment contains static and dynamic obstacles, described by matrices $\bm{H}_s$ and $\bm{H}_d$ respectively. Static obstacles (e.g., buildings) are modeled as perfect electromagnetic (EM) shields, resulting in infinite pathloss ($P(i,j)=\infty$) in their interiors. In contrast, dynamic obstacles (e.g., vehicles) cause partial attenuation and scattering without fully blocking EM propagation. The entries $H_s(i,j) = 0$ and $H_d(i,j) = 0$ indicate the absence of static and dynamic obstacles at $(i,j)$, respectively. The goal is to learn a neural network $\bm{\mu}_\theta(\cdot)$ parameterized by $\bm{\theta}$ to predict the pathloss distribution $\hat{\bm{P}} = \bm{\mu}_\theta(\bm{H}_s, \bm{H}_d, r)$ that approximates the ground truth $\bm{P}$. The construction error is measured by a loss function $\mathcal{L}(\hat{\bm{P}}, \bm{P})$, typically the mean squared error (MSE).

However, in time-sensitive applications such as intelligent vehicular networks or drone-based coverage optimization, construction delay becomes a critical performance metric alongside accuracy. This is particularly relevant in the context of diffusion-based generative models, which are state-of-the-art in sampling-free RM construction due to their superior ability to model high-frequency textures and multi-modal uncertainty. Nevertheless, diffusion models involve iterative denoising over $T$ time steps, each requiring a full neural network evaluation, resulting in considerable computational delay. Let $\mathcal{C}_{\text{conv}}$ and $\mathcal{C}_{\text{attn}}$ denote the computational complexity per forward pass of a convolutional layer and an attention layer, respectively. These complexities scale as follows.
\begin{align}
\mathcal{C}_{\text{conv}} = \mathcal{O}(K^2 C_{\text{in}} C_{\text{out}} H W),
\mathcal{C}_{\text{attn}} = \mathcal{O}(H W d^2 + H^2 W^2 d),\label{eq-cnn-complexity}
\end{align}
where $K$ is the kernel size, $C_{\text{in}}$, $C_{\text{out}}$ are input/output channels, $H \times W$ is the spatial size, and $d$ is the feature dimension. These operations dominate the runtime of each neural forward pass.

For a denoising diffusion model, let $\mathcal{C}_{\text{net}}$ represent the complexity of a single neural network evaluation. The total computational complexity of the diffusion process is as follows.
\begin{align}
\mathcal{C}_{\text{DM}} = T \cdot \mathcal{C}_{\text{net}},
\end{align}
where $T$ is the number of denoising steps, which is typically from 500 to 1000. Due to the scaling law in deep learning, reducing $\mathcal{C}_{\text{net}}$ by shrinking the model size typically degrades performance, motivating the need to minimize $\mathcal{C}_{\text{DM}}$ by reducing $T$ or reusing partial computations.

Therefore, we propose a new formulation of the RM construction task as a bi-objective optimization problem that jointly minimizes both the construction error and the computational delay. Specifically, let $\mathcal{T}(\bm{\theta})$ denote the expected time to construct the RM using parameters $\bm{\theta}$, which can be approximated as $\mathcal{T}(\bm{\theta}) = T \cdot \tau(\bm{\theta})$, where $\tau(\bm{\theta})$ is the time for a single forward pass. We formulate the RM construction task as follows.
\begin{problem}\label{p1}
\begin{align}
    &\min_{\bm{\theta}, T} && \mathcal{L}(\hat{\bm{P}}, \bm{P}) + \lambda \cdot \mathcal{T}(\bm{\theta}) \label{obj-delay}\\
    &\text{s.t.} && \hat{\bm{P}} = \bm{\mu}_{\bm{\theta}}(\bm{H}_s, \bm{H}_d, r), \tag{\ref{obj-delay}a}
\end{align}
\end{problem}
\noindent where $\lambda > 0$ is a weighting coefficient that balances accuracy and inference speed. The constraint $T \leq T_{\max}$ ensures tractable inference latency. This formulation emphasizes the dual objective of accurate and efficient RM construction. It highlights the critical importance of optimizing not only the performance of the diffusion model but also the number of diffusion steps and architectural efficiency. It also provides a theoretical motivation for investigating the reuse of intermediate latent states or denoising acceleration strategies as explored in our proposed method. \added{It is important to note that this formulation primarily serves as a high-level motivation for our work, framing the inherent trade-off between construction accuracy and inference latency. The coefficient $\lambda$ represents the relative importance of speed versus accuracy. Instead of directly optimizing this objective via $\lambda$, our proposed framework, RadioDiff-Flux, addresses this trade-off architecturally by reducing the number of effective inference steps. Our experiments then empirically evaluate this trade-off by varying the reuse ratio $R_{\text{reuse}}$.}

\added{Our current model considers a single BS for clarity in formulation and evaluation. However, the framework can be extended to multi-BS environments. A practical approach is to generate an individual RM for each BS and then combine them using signal superposition principles, such as selecting the strongest signal at each location. In this context, the efficiency of RadioDiff-Flux becomes even more pronounced, as it can rapidly generate RMs for multiple BSs within the same static environment by reusing the pre-computed midpoint, significantly reducing the overall computation time compared to generating each map from scratch. A more integrated approach, which we leave for future work, would involve architecturally modifying the model to accept multiple BS locations as a single conditional input to generate a composite RM directly.}

\section{DM Midepoint Reusing}
\subsection{Motivation and Theoretical Analysis}
Recent advances in neural network-based RM generation, particularly those employing latent diffusion models (LDMs), have achieved significant improvements in construction fidelity by denoising within a compressed latent space. Although the primary motivation for LDMs lies in reducing inference complexity, insights from semantic communication, especially deep joint source-channel coding (Deep JSCC), reveal a deeper implication: the encoder’s latent feature maps inherently capture high-level semantic representations of environmental characteristics, such as obstacle layout and structural topology \cite{bourtsoulatze2019deep,xie2021deep}. In the context of RM construction, this implies that RMs generated under varying BS positions, but within the same static environment, should share substantial semantic information. This is visually corroborated by Fig.~\ref{fig-cluster} and Fig.~\ref{fig-metric-bs}, which illustrate that RMs with nearby BS positions yield tightly clustered embeddings in latent space, evidencing their shared environmental semantics.

To empirically validate this observation, we conduct a controlled study illustrated in Fig.~\ref{fig-metric-bs} and Fig.~\ref{fig-metric-env}. We consider three types of scenarios: (1) constant environment with varying BS positions, (2) fixed BS location with dynamic obstacles (e.g., moving vehicles), and (3) a reference RM. By introducing identical Gaussian noise—according to the diffusion forward process—across all three cases, we evaluate the normalized mean square error (NMSE) between generated samples at different diffusion steps. The results show that in cases with only dynamic variations, the NMSE between samples becomes negligible after approximately $t = 600$, and its derivative declines sharply near $t = 400$. This supports the hypothesis that the denoising paths of semantically similar RMs converge significantly in later diffusion stages, indicating that intermediate noisy representations (i.e., \(\bm{z}_t\)) can be effectively reused across related scenarios, which forms the foundation for our proposed fast inference strategy.

To further support this insight theoretically, we analyze the similarity between intermediate latent states using the Kullback-Leibler (KL) divergence. The following result quantifies how the divergence between two latent vectors, under the same diffusion noise level $t$, decreases as their semantic similarity increases:

\begin{theorem}
Let $\bm{z}_i$ and $\bm{z}_j$ be two latent vectors extracted by a variational autoencoder (VAE) from RMs under similar environmental conditions. After applying $t$ steps of the forward diffusion process as defined in Eq.~\eqref{ddm-forward}, their resulting distributions are $p(x) = \mathcal{N}((1 - t)\bm{z}_i, tI)$ and $q(x) = \mathcal{N}((1 - t)\bm{z}_j, tI)$, respectively. Then, the KL divergence between them satisfies:
\[
D_{\mathrm{KL}}(p \| q) = \frac{1}{2} \frac{(1 - t)^2}{t} \|\bm{z}_i - \bm{z}_j\|^2.
\]
\end{theorem}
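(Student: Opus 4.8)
The plan is to recognize this identity as a direct specialization of the closed-form Kullback--Leibler divergence between two multivariate Gaussians, followed by an elementary simplification that exploits the fact that the two diffused distributions share the same covariance. First I would recall that for $p = \mathcal{N}(\bm{\mu}_p, \Sigma_p)$ and $q = \mathcal{N}(\bm{\mu}_q, \Sigma_q)$ on $\mathbb{R}^n$,
\[
D_{\mathrm{KL}}(p \| q) = \frac{1}{2}\left[ \operatorname{tr}(\Sigma_q^{-1}\Sigma_p) - n + (\bm{\mu}_q - \bm{\mu}_p)^\top \Sigma_q^{-1} (\bm{\mu}_q - \bm{\mu}_p) + \log\frac{\det \Sigma_q}{\det \Sigma_p} \right].
\]
If a self-contained derivation is preferred, I would instead write $D_{\mathrm{KL}}(p\|q) = \mathbb{E}_{x \sim p}[\log p(x) - \log q(x)]$, expand the two Gaussian log-densities, and evaluate the expectation using $\mathbb{E}_{x\sim p}[x-\bm{\mu}_p] = \bm{0}$ and $\mathbb{E}_{x\sim p}[(x-\bm{\mu}_p)(x-\bm{\mu}_p)^\top] = \Sigma_p$.

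Next I would substitute the structure specific to our setting. By the forward kernel in Eq.~\eqref{ddm-forward} with the decoupled drift that yields mean decay $(1-t)\bm{z}_0$, after $t$ steps the two latents have means $\bm{\mu}_p = (1-t)\bm{z}_i$ and $\bm{\mu}_q = (1-t)\bm{z}_j$ and the common isotropic covariance $\Sigma_p = \Sigma_q = tI$. Because the covariances coincide, the trace term collapses to $\operatorname{tr}(\Sigma_q^{-1}\Sigma_p) = \operatorname{tr}(I) = n$, cancelling the $-n$, and the log-determinant ratio $\log(\det \Sigma_q / \det \Sigma_p)$ vanishes. Only the quadratic Mahalanobis term $\tfrac{1}{2}(\bm{\mu}_q - \bm{\mu}_p)^\top \Sigma_q^{-1}(\bm{\mu}_q - \bm{\mu}_p)$ survives.

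Finally I would plug in $\bm{\mu}_q - \bm{\mu}_p = (1-t)(\bm{z}_j - \bm{z}_i)$ and $\Sigma_q^{-1} = t^{-1} I$ to obtain
\[
D_{\mathrm{KL}}(p\|q) = \frac{1}{2}\,(1-t)^2\,(\bm{z}_i - \bm{z}_j)^\top \big(t^{-1} I\big)(\bm{z}_i - \bm{z}_j) = \frac{1}{2}\,\frac{(1-t)^2}{t}\,\|\bm{z}_i - \bm{z}_j\|^2,
\]
which is the claimed expression. There is no substantial obstacle here; the only point that genuinely requires care is the cancellation of the trace and log-determinant contributions, which hinges entirely on the two diffused distributions sharing the same covariance $tI$ — a consequence of injecting identical Gaussian noise at the same timestep $t$. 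I would close by noting the qualitative reading that makes the theorem useful: the divergence is proportional to the squared semantic distance $\|\bm{z}_i - \bm{z}_j\|^2$, and for fixed latents the prefactor $(1-t)^2/t$ shrinks as $t$ increases toward $1$, so semantically close radio maps have provably close intermediate distributions in the later stages of diffusion, justifying midpoint reuse.
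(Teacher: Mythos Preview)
Your proposal is correct and follows essentially the same approach as the paper: both reduce the Gaussian KL to the Mahalanobis term by exploiting the shared covariance $tI$, then substitute the means $(1-t)\bm{z}_i$, $(1-t)\bm{z}_j$. You are simply more explicit than the paper about why the trace and log-determinant contributions vanish, whereas the paper jumps directly to the equal-covariance formula $D_{\mathrm{KL}}(p\|q)=\tfrac{1}{2}(\bm{\mu}_j-\bm{\mu}_i)^\top(tI)^{-1}(\bm{\mu}_j-\bm{\mu}_i)$.
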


\begin{proof}
Let the means be $\bm{\mu}_i = (1 - t)\bm{z}_i$ and $\bm{\mu}_j = (1 - t)\bm{z}_j$. Since both distributions share the same covariance matrix $tI$, the KL divergence is:
\[
D_{\mathrm{KL}}(p \| q) = \frac{1}{2}(\bm{\mu}_j - \bm{\mu}_i)^T (tI)^{-1}(\bm{\mu}_j - \bm{\mu}_i).
\]
Substituting $\bm{\mu}_j - \bm{\mu}_i = (1 - t)(\bm{z}_j - \bm{z}_i)$ and $(tI)^{-1} = \frac{1}{t}I$, we obtain:
\[
D_{\mathrm{KL}}(p \| q) = \frac{1}{2} \frac{(1 - t)^2}{t} \| \bm{z}_j - \bm{z}_i \|^2.
\]
\end{proof}
\noindent This theoretical result not only justifies our empirical findings but also provides an upper bound on divergence that decays quadratically with increasing $t$. It implies that, at sufficiently high diffusion steps, semantically similar latent vectors become indistinguishable in distribution. Thus, from both practical and theoretical standpoints, it is viable to reuse intermediate diffusion states when the underlying semantic content remains consistent. This forms the core innovation of our method, enabling accelerated RM construction by bypassing redundant denoising operations, without compromising estimation quality.

\added{In practice, to decide whether a cached midpoint can be reused across environments, we measure environment similarity in the cross-attention space of the pretrained RadioDiff backbone. The conditioning encoder yields tokens \(\bm{H}_s\), \(\bm{H}_d\), and \(\bm{r}\). We concatenate them as \(\mathcal{C}=[\bm{H}_s,\bm{H}_d,\bm{r}]\), then obtain pre-attention projections \(\bm{K}=\mathcal{C}\bm{W}_K\) and \(\bm{V}=\mathcal{C}\bm{W}_V\) using the frozen key and value matrices \(\bm{W}_K\) and \(\bm{W}_V\) from RadioDiff \cite{wang2024radiodiff}. Given two environments \(A\) and \(B\) with the same tokenizer and token layout so that tokens are aligned, we define a normalized Frobenius distance
\begin{align}
    D_{\mathrm{env}}(A,B)=\sqrt{\lVert \bm{K}^A-\bm{K}^B\rVert_F^2+\lVert \bm{V}^A-\bm{V}^B\rVert_F^2}.
\end{align}
Reuse is triggered when \(D_{\mathrm{env}}\le \tau\), with \(\tau\) selected on a small validation set at the knee of the accuracy versus reuse curve so that reuse respects a predefined error budget. This criterion is resolution agnostic since it operates on tokens, requires no retraining because the attention block is frozen, and leverages the fact that RadioDiff has learned geometry- and visibility-aware embeddings. Empirical examples of \(D_{\mathrm{env}}\) and the calibration of \(\tau\) are reported in the experimental section.
}

\subsection{Dual-DM based Midpoint Reuse for RM Construction}
To address the growing demand for low-latency and high-fidelity RM construction in dynamic wireless environments, we propose a two-stage conditional latent diffusion framework that explicitly decouples static environmental semantics from dynamic variations and transmitter-specific attributes. This architectural design is grounded in the observation that diffusion trajectories of semantically similar scenes exhibit strong convergence in intermediate latent space, as demonstrated in Section III. The proposed method aims to minimize redundant computation in early diffusion steps by strategically reusing shared semantic structures across similar scenarios. In the first stage of our framework, a dedicated latent diffusion model is trained to model coarse environmental semantics. This model is conditioned exclusively on static environmental context, such as the layout of buildings and large-scale terrain, and generates an intermediate latent state referred to as the diffusion midpoint. This midpoint serves as a high-level semantic representation of the scene, abstracted away from transient elements and transmitter configurations. In the second stage, a separate conditional diffusion model, now additionally conditioned on the dynamic environment, such as moving vehicles, and BS location, performs the remaining denoising steps to reconstruct the final RM. Importantly, this second-stage model initiates the generation process from the precomputed midpoint, thereby bypassing the computationally intensive early stages of denoising. This two-stage formulation introduces two critical advantages. First, in mobility-driven applications, such as those involving AAVs or mobile BSs, where static environmental features remain unchanged, the midpoint can be cached and reused across different BS placements. This allows the system to quickly adapt to new BS coordinates with minimal overhead. Second, in scenarios with fixed BS deployments but evolving dynamic conditions, our framework enables efficient RM updates by isolating the denoising effort to dynamic perturbations alone, leveraging the precomputed static-conditioned latent features. Such capabilities are crucial for enabling real-time responsiveness in 6G systems characterized by dense user mobility and environment variability.

\added{In this context, the ``diffusion midpoint" does not refer to a fixed temporal halfway point (i.e., $t=T/2$), but rather to any intermediate latent state $\bm{z}_t$ along the denoising trajectory, determined by the reuse ratio $R_{\text{reuse}}$. The selection of this point is flexible, and as our experiments show, the model's performance is sensitive to this choice, creating a direct trade-off between inference speed and reconstruction fidelity.} From a computational standpoint, our design also brings substantial efficiency gains by \added{reducing the complexity of the condition embedding module commonly used in conditional diffusion models. This two-stage approach provides a guaranteed reduction in computational complexity. The feature extracting network for a figure form data is usually initialized by a CNN layer. According to \eqref{eq-cnn-complexity}, by reducing the input channel depth from three to one for static features at this critical first step, we achieve a significant and direct decrease in the total floating point operations (FLOPs) required.}  As a result, both the total inference latency and computational footprint are substantially lowered. In summary, the proposed two-stage conditional latent diffusion model leverages both semantic reusability and architectural decoupling to achieve fast, adaptive, and scalable RM generation. By aligning model design with the theoretical insights into latent similarity among semantically related scenes, our method delivers robust performance across a range of dynamic and mobility-aware scenarios, thereby advancing the practicality of generative RM construction for next-generation wireless networks.

\section{Experiments}
\subsection{Datasets and Evaluation Metrics}
Our evaluation utilizes the RadioMapSeer dataset \cite{yapar2023first}, stemming from the pathloss RM construction challenge. This dataset comprises 700 unique urban maps, each detailing geographic features like buildings(ranging from 50 to 150 per map). For training, we selected 500 maps, reserving the remaining 200 for testing, ensuring no spatial overlap between the two subsets. Every map includes 80 transmitter locations and their corresponding ground truth RMs. The map data originates from OpenStreetMap, covering various cities including Ankara, Berlin, Glasgow, Ljubljana, London, and Tel Aviv. Standard physical parameters across the dataset are consistent: transmitter and receiver heights are 1.5 meters, and building heights are 25 meters. Each map is rendered as a 256 × 256 pixel binary morphological image, representing a 1m resolution grid where '1' signifies a building area and `0' non-building space. Transmitter positions are provided numerically and marked in the morphological image by setting the corresponding pixel to '1'. Transmissions occur at 23 dBm power and 5.9 GHz carrier frequency. Ground truth RMs, crucial for training, are generated based on Maxwell's equations, modeling pathloss from electromagnetic ray reflection and diffraction. Specifically, the static RM (SRM) ground truth considers only the impact of fixed buildings. For dynamic RMs (DRM), the ground truth incorporates effects from both static buildings and randomly positioned vehicles along roads, as illustrated by Fig.~\ref{fig-example}.

\begin{figure}[t]
\captionsetup{font={small}, skip=16pt}
    \centering
    \vspace{-9pt}
    \subfigure[The illustration of SRM.]
    {
       \centering
       \includegraphics[width=0.405\columnwidth]{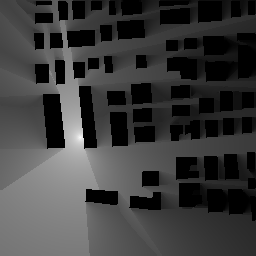}\label{srm}
    }
    \subfigure[The illustration of DRM.]
    {
       \centering
       \includegraphics[height=0.405\columnwidth]{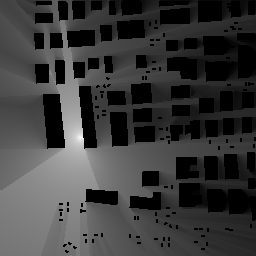}\label{drm}
    }
    \caption{Illustration of the RM. Pure black regions denote buildings or vehicles, signifying areas impassable to radio signals. The rest of the map is rendered as a grayscale image, with the grayscale level exhibiting a positive correlation to the pathloss value; brighter areas indicate higher pathloss.}
    \label{fig-example}
    \vspace{-3pt}
\end{figure}

To comprehensively evaluate the quality of constructed RMs, we employ several key metrics. We begin with standard error measures, Normalized Mean Squared Error (NMSE) and Root Mean Squared Error (RMSE), as in prior studies \cite{levie2021radiounet}. Recognizing that overall error metrics do not fully capture crucial structural details and integrity, we complement these with Structural Similarity Index Measurement (SSIM) and Peak Signal-to-Noise Ratio (PSNR). SSIM quantifies structural preservation, while PSNR assesses signal fidelity, particularly edge accuracy.
\subsubsection{MSE}
Mean Squared Error (MSE) measures the average squared difference between the ground truth and predicted RM pixel values. which can be calculated as $MSE=\frac1{NM}\Sigma_{m=0}^{M-1}\sum_{n=0}^{N-1}e(m,n)^2$, where $e(m,n)$ is the error at pixel $(m,n)$, and $M,N$ are image dimensions. NMSE scales MSE to the signal power, and RMSE provides an error measure in the same units as the data. which can be calculated as $NMSE=\frac{\Sigma_{m=1}^M\Sigma_{n=1}^N(I_b(m,n)-I(m,n))^2}{\Sigma_{m=1}^M\sum_{n=1}^NI^2(m,n)}$, and 
$\mathrm{RMSE}=\sqrt{MSE}$.
\subsubsection{SSIM}
SSIM evaluates image similarity considering luminance, contrast, and structural information, aligning well with the need to assess high-frequency details in RMs, which can be calculated as follows.
\begin{equation}l(x,y)=\frac{2\mu_X(x,y)\mu_Y(x,y)+C_1}{\mu_X^2(x,y)+\mu_Y^2(x,y)+C_1}\end{equation}
\begin{equation}c(x,y)=\frac{2\sigma_X(x,y)\sigma_Y(x,y)+C_2}{\sigma_X^2(x,y)+\sigma_Y^2(x,y)+C_2}\end{equation}
\begin{equation}s(x,y)=\frac{\sigma_{XY}(x,y)+C_3}{\sigma_X(x,y)\sigma_Y(x,y)+C_3}\end{equation}
where $x,y$ are the images, $\mu, \sigma^2,$ and $\sigma_{xy}$ are mean, variance, and covariance respectively. Constants $C_1=(K_1L)^2$, $C_2=(K_2L)^2$, and $C_3=C_2/2$ prevent division by zero, with $L$ being the data dynamic range. The final SSIM can be calculated as follows.
\begin{equation}SSIM(x,y)=\frac{(2\mu_x\mu_y+C_1)(\sigma_{xy}+C_2)}{(\mu_x^2+\mu_y^2+C_1)(\sigma_x^2+\sigma_y^2+C_2)}\end{equation}
\subsubsection{PSNR}
PSNR, expressed in dB, measures the ratio of maximum signal power to noise power, indicating reconstruction fidelity. A higher PSNR typically suggests better quality. For RMs, PSNR is particularly valuable for assessing the quality of reconstructed signal edges, which can be calculated as follows.
\begin{equation}PSNR=10\log_{10}\left(\frac{r^2}{MSE}\right)\end{equation}
where $r$ is the maximum possible pixel value in the image data.

\added{To quantify the caching footprint, we store midpoints in the latent space of a standard VAE. Each latent has size 64×64×4 in float32, which corresponds to 65,536 bytes, about 64 KB per cached midpoint. The cache size is independent of input resolution because it is maintained in latent space. In our deployments with fixed base-station layouts or mobile access points operating within a fixed region of interest, one cached midpoint that captures the static building layout is sufficient, about 64 KB in total. For city-scale service, we cap the cache at at most 100 midpoints, leading to about 6.25 MB in float32, and this can be reduced by half with float16 storage while preserving model behavior. For extracting features from the condition $\mathcal{C}$, we employ a Swin Transformer-B \cite{liu2021swin}, an architecture renowned for its powerful hierarchical feature representation. Our proposed two-stage method directly reduces the computational complexity of this feature extractor. The savings are realized at the transformer's initial patch embedding layer, which is implemented as a 3x3 convolution. By processing static conditions with a single-channel input instead of a three-channel one, we significantly cut down on the required operations. Specifically, for a typical input resolution of 256x256 and an embedding dimension of 128, this modification reduces the computational load by over 300 MFLOPs, making the entire pipeline more efficient before the feature maps are even passed to the subsequent self-attention blocks.}

\subsection{Experimental Methodology}
\label{sec:experimental_methodology}
The training pipeline of RadioDiff-Flux is fully aligned with that of RadioDiff \cite{wang2024radiodiff}, ensuring a consistent optimization framework. Specifically, the second-stage module, responsible for generating the complete RM conditioned on both environmental context and BS location, directly adopts the pre-trained weights from RadioDiff without any additional fine-tuning or retraining. Only the first-stage network, which infers the diffusion midpoint based solely on static environmental features, is retrained from scratch. This retraining process follows the same training strategy and loss configuration as used in RadioDiff, ensuring architectural and procedural consistency across both stages. To evaluate the efficacy of our proposed RadioDiff-Flux framework, particularly the midpoint reuse strategy, we first established baseline diffusion models and then conducted a series of experiments across diverse environmental change scenarios.

\subsubsection{Baseline Models}
\label{ssec:baseline_models}
To evaluate the effectiveness of our proposed method, we compare it against four representative baseline models that span both discriminative and generative paradigms for sampling-free RM construction as follows.
\begin{itemize}
    \item \textbf{RadioUNet} \cite{levie2021radiounet} serves as a foundational CNN-based benchmark for RM reconstruction. Leveraging the U-Net architecture, it is trained in a fully supervised manner to directly regress RMs from environmental features. Its simplicity and reliability have made it a standard reference in the field, particularly for evaluating discriminative methods.
    \item \textbf{UVM-Net} \cite{zheng2024u} builds on the same training protocol and input format as RadioUNet but replaces the convolutional backbone with a state space model, which enhances the architecture. Designed to handle long-range dependencies, SSMs project sequences into hidden state dynamics, allowing UVM-Net to better capture both localized textures and broader structural patterns. This modification makes it a compelling baseline for assessing how sequence modeling techniques improve spatial inference in complex environments.
    \item \textbf{RME-GAN} \cite{zhang2023rme} introduces a generative adversarial framework that originally combines environmental context with sparse pathloss measurements for conditional generation. For fair comparison under a sampling-free setup, we disable SPM input and use only environmental data. While RME-GAN showcases the potential of adversarial learning in wireless modeling, its reliance on sampled measurements in its canonical form limits its generalizability.
    \item \textbf{RadioDiff} \cite{wang2024radiodiff} currently represents the SOTA in RM generation, which formulates the task as a conditional generative process using a DM in latent space. By integrating a VAE for encoding and a UNet-based denoiser for reverse-time generation, RadioDiff captures both fine-grained spatial details and macro-scale pathloss structure. Its performance in terms of both accuracy and perceptual quality sets a strong baseline for advanced generative methods. 
    \item \textbf{Vanilla Midpoint Reuse (Ours)} implements a straightforward strategy for accelerating inference by directly reusing the cached midpoint of the diffusion trajectory. When either the base station location or the environmental configuration changes, the denoising process is initialized from this previously computed midpoint. The conditioning input of the diffusion model is then updated to reflect the new scenario, enabling reuse without requiring any fine-tuning or additional training. This approach leverages the pre-trained RadioDiff model in its entirety.
    \item \textbf{RadioDiff-Flux (Ours)} introduces a more structured two-stage framework. The first stage involves training a diffusion model conditioned solely on the static environment, enabling efficient generation of a semantically meaningful midpoint that captures large-scale spatial structures. The second stage then uses the pre-trained RadioDiff model to complete the denoising process, conditioned on both dynamic environmental features and base station location. This design allows for modular reuse of static information while maintaining high reconstruction fidelity in dynamically varying conditions.
\end{itemize}
\added{Notably, since RadioDiff-Flux directly inherits the architecture and pre-trained weights of RadioDiff for its core conditional generative stage, RadioDiff serves a dual role in our evaluation: it is not only the state-of-the-art benchmark but also the definitive ablation baseline for our framework operating without the midpoint reuse strategy.}

\subsubsection{Evaluation of Reuse Strategy under Environmental Changes}
\label{ssec:evaluation_scenarios}
With the trained models, we designed three experimental scenarios to assess the impact of varying the reuse ratio, $R_{\text{reuse}}$, on RM generation. The process begins by denoising from a Gaussian noise sample for a total of $T=100$ diffusion steps. Our reuse strategy involves changing the environmental conditioning information partway through this process, at a step determined by $R_{\text{reuse}}$.

\textbf{{Scenario 1: Changing Base Station Position.}}
\label{sssec:scenario_bs_change}
This scenario investigates reusing initial denoising steps when only the BS position changes within the same static environment. The process starts by running the static model, for a fraction of the total steps ($R_{\text{reuse}} \cdot T$) using an initial BS position. Then, for the remaining steps, the conditioning is switched to a new, target BS position. The final generated RM is then compared against the ground truth for this new position. This reuse approach aims to save computational resources compared to generating two RMs independently from scratch. We evaluated $R_{\text{reuse}}$ values of $[0.1, 0.4, 0.7, 0.9, 0.95, 0.98]$. The results are presented in Fig.~\ref{fig:results_scenario1_6cols}.

\begin{figure*}[htbp]
    \centering
    \captionsetup{font={small}}
    \begin{tabular}{@{}c@{\hspace{1mm}}c@{\hspace{1mm}}c@{\hspace{1mm}}c@{\hspace{1mm}}c@{\hspace{1mm}}c@{}}
        \includegraphics[width=0.15\linewidth]{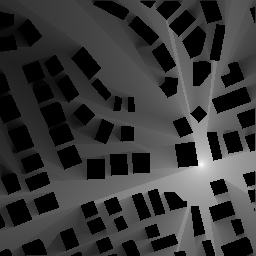} & 
        \includegraphics[width=0.15\linewidth]{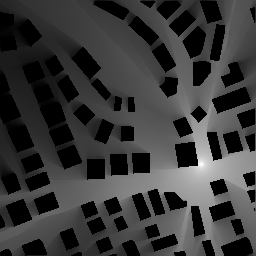} &
        \includegraphics[width=0.15\linewidth]{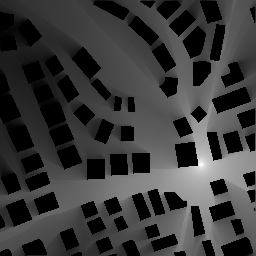} &
        \includegraphics[width=0.15\linewidth]{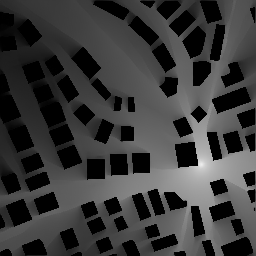} &
        \includegraphics[width=0.15\linewidth]{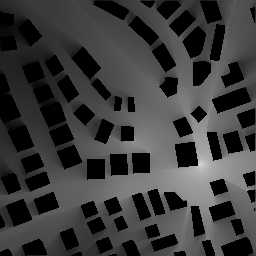} & 
        \includegraphics[width=0.15\linewidth]{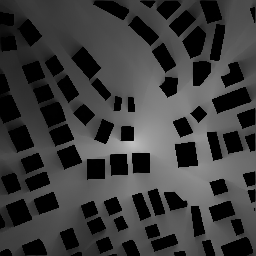} \\[2mm]
        
        \includegraphics[width=0.15\linewidth]{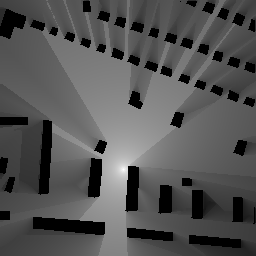} &
        \includegraphics[width=0.15\linewidth]{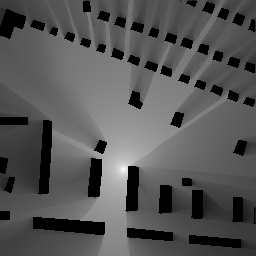} &
        \includegraphics[width=0.15\linewidth]{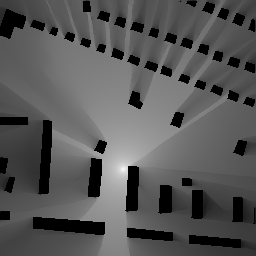} &
        \includegraphics[width=0.15\linewidth]{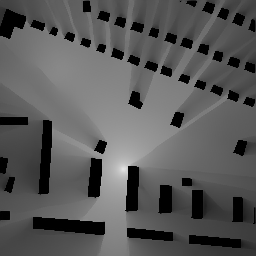} &
        \includegraphics[width=0.15\linewidth]{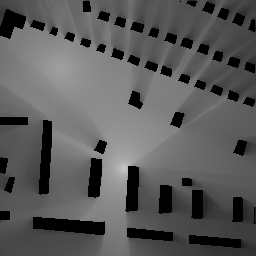} & 
        \includegraphics[width=0.15\linewidth]{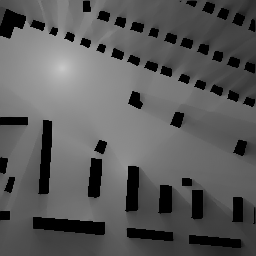} \\[2mm]

        \includegraphics[width=0.15\linewidth]{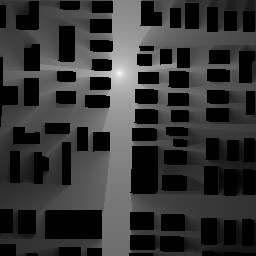} &
        \includegraphics[width=0.15\linewidth]{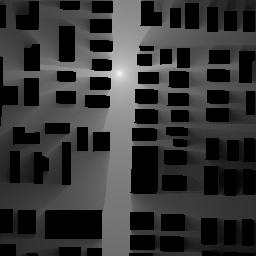} &
        \includegraphics[width=0.15\linewidth]{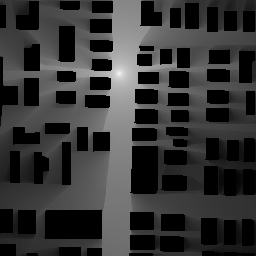} &
        \includegraphics[width=0.15\linewidth]{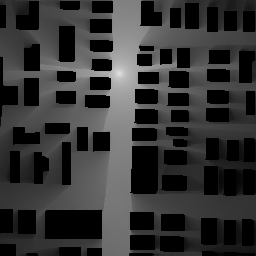} &
        \includegraphics[width=0.15\linewidth]{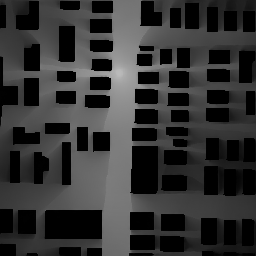} & 
        \includegraphics[width=0.15\linewidth]{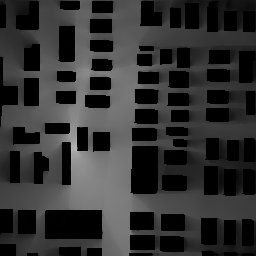} \\[2mm]
        
        \small Ground truth & \small $R_{\text{reuse}}=0.4$ & \small $R_{\text{reuse}}=0.7$ & \small $R_{\text{reuse}}=0.9$ & \small $R_{\text{reuse}}=0.95$ & \small $R_{\text{reuse}}=0.98$ \\
    \end{tabular}
    \caption{Visual comparison of RM generation under Scenario 1 (Base Station Position Change). Each row presents a distinct test case. The first column displays Ground Truth RMs. Subsequent columns illustrate generated RMs for varying trajectory reuse ratios ($R_{\text{reuse}}$).}
\label{fig:results_scenario1_6cols}
\end{figure*}

\textbf{{Scenario 2: Transitioning from Static to Dynamic Environment.}}
\label{sssec:scenario_dynamic_change}
This scenario simulates the introduction of dynamic elements, such as vehicles, into a previously static environment. The building layout and BS position remain fixed. The process begins by running the static model, for the initial fraction of steps. Then, we switch to the dynamic model, for the remaining steps, providing it with the new vehicle information. The final RM is evaluated against the ground truth that includes these vehicles. The results for different reuse ratios are shown in Fig.~\ref{fig:results_scenario2_6cols}.

\begin{figure*}[htbp]
    \centering
    \captionsetup{font={small}}
    \begin{tabular}{@{}c@{\hspace{1mm}}c@{\hspace{1mm}}c@{\hspace{1mm}}c@{\hspace{1mm}}c@{\hspace{1mm}}c@{}}
        \includegraphics[width=0.15\linewidth]{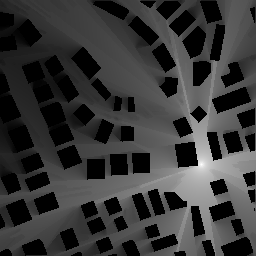} & 
        \includegraphics[width=0.15\linewidth]{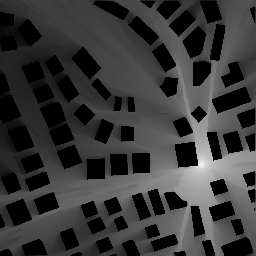} &
        \includegraphics[width=0.15\linewidth]{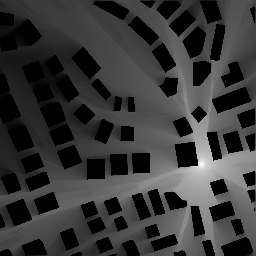} &
        \includegraphics[width=0.15\linewidth]{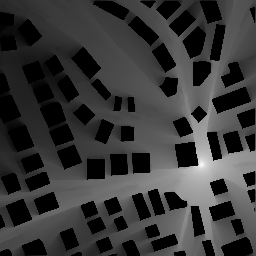} &
        \includegraphics[width=0.15\linewidth]{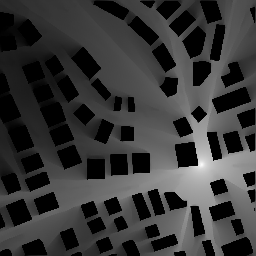} & 
        \includegraphics[width=0.15\linewidth]{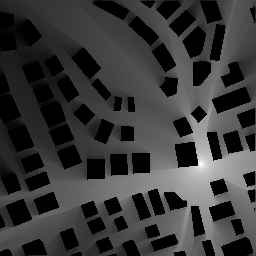} \\[2mm]
        
        \includegraphics[width=0.15\linewidth]{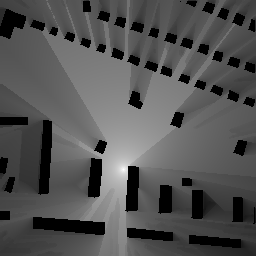} &
        \includegraphics[width=0.15\linewidth]{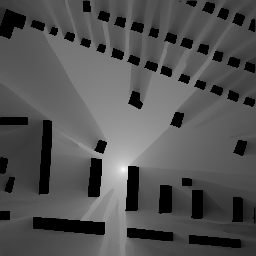} &
        \includegraphics[width=0.15\linewidth]{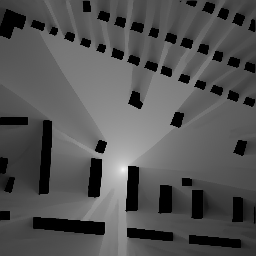} &
        \includegraphics[width=0.15\linewidth]{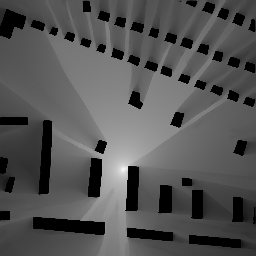} &
        \includegraphics[width=0.15\linewidth]{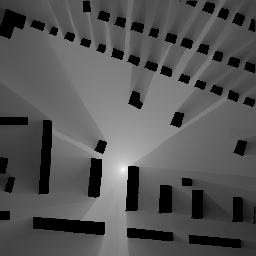} & 
        \includegraphics[width=0.15\linewidth]{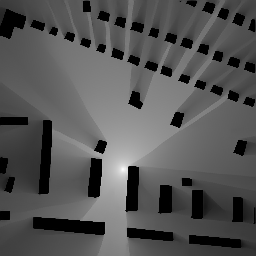} \\[2mm]

        \includegraphics[width=0.15\linewidth]{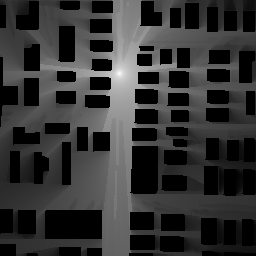} &
        \includegraphics[width=0.15\linewidth]{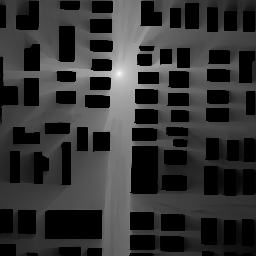} &
        \includegraphics[width=0.15\linewidth]{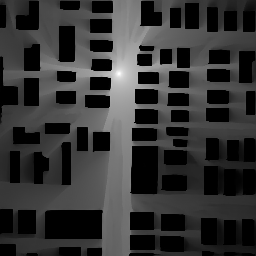} &
        \includegraphics[width=0.15\linewidth]{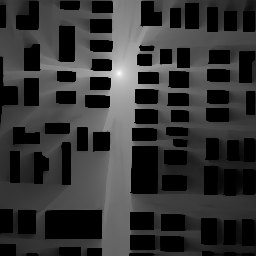} &
        \includegraphics[width=0.15\linewidth]{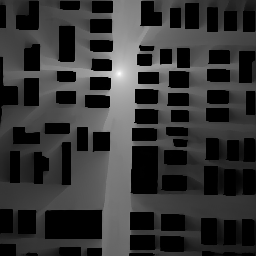} & 
        \includegraphics[width=0.15\linewidth]{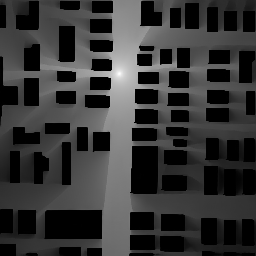} \\[2mm]
        
        \small Ground truth & \small $R_{\text{reuse}}=0.4$ & \small $R_{\text{reuse}}=0.7$ & \small $R_{\text{reuse}}=0.9$ & \small $R_{\text{reuse}}=0.95$ & \small $R_{\text{reuse}}=0.98$ \\
    \end{tabular}
    \caption{Visual comparison of RM generation under Scenario 2 (Transition from Static to Dynamic Environment). Each row presents a distinct test case. Ground Truth RMs are shown in the first column. Subsequent columns illustrate RMs generated using varying trajectory reuse ratios ($R_{\text{reuse}}$).}
\label{fig:results_scenario2_6cols}
\end{figure*}

\textbf{{Scenario 3: Directly Modifying the Static Environment.}}
\label{sssec:scenario_static_env_change}
This scenario assesses the reuse strategy when the static environment itself changes, involving alterations in both the building layout and the BS position. The static model is used throughout. For the initial fraction of steps, the model is conditioned on an initial static layout and BS position. For the remaining steps, it is conditioned on a new building layout and BS position. The final RM is evaluated against the ground truth for this new environment. The results are detailed in Fig.~\ref{fig:results_scenario3_6cols}.

\begin{figure*}[htbp]
    \centering
    \captionsetup{font={small}}
    \begin{tabular}{@{}c@{\hspace{1mm}}c@{\hspace{1mm}}c@{\hspace{1mm}}c@{\hspace{1mm}}c@{\hspace{1mm}}c@{}}
        \includegraphics[width=0.15\linewidth]{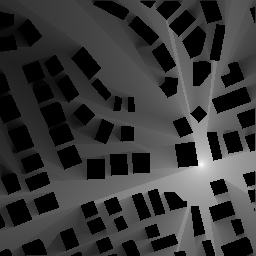} & 
        \includegraphics[width=0.15\linewidth]{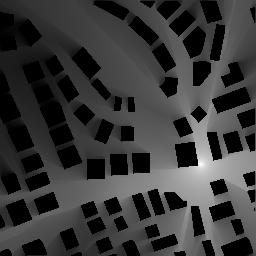} &
        \includegraphics[width=0.15\linewidth]{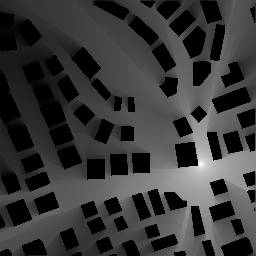} &
        \includegraphics[width=0.15\linewidth]{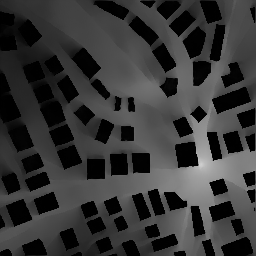} &
        \includegraphics[width=0.15\linewidth]{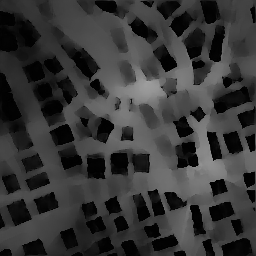} & 
        \includegraphics[width=0.15\linewidth]{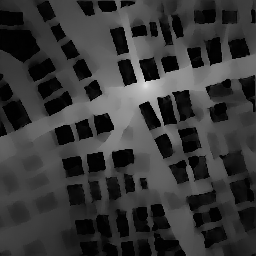} \\[2mm]
        
        \includegraphics[width=0.15\linewidth]{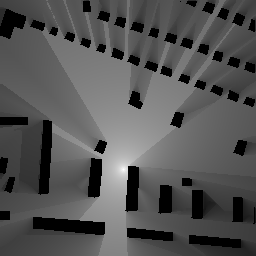} &
        \includegraphics[width=0.15\linewidth]{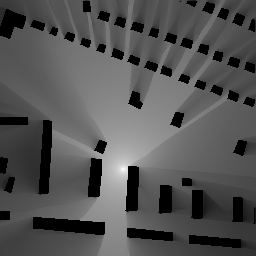} &
        \includegraphics[width=0.15\linewidth]{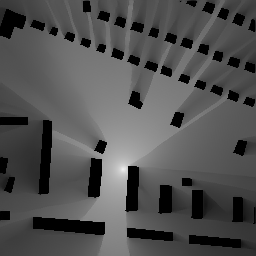} &
        \includegraphics[width=0.15\linewidth]{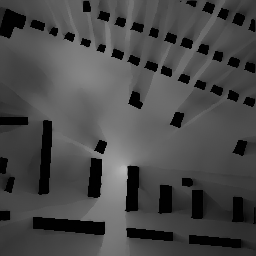} &
        \includegraphics[width=0.15\linewidth]{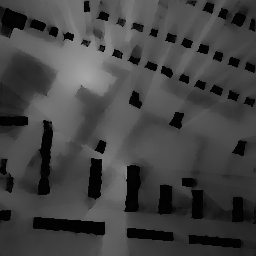} & 
        \includegraphics[width=0.15\linewidth]{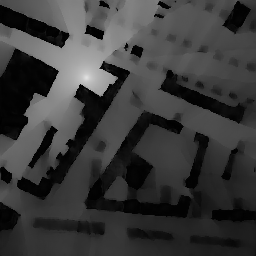} \\[2mm]

        \includegraphics[width=0.15\linewidth]{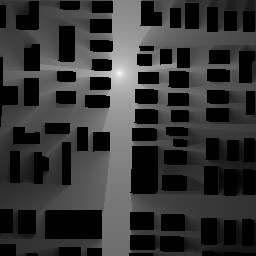} &
        \includegraphics[width=0.15\linewidth]{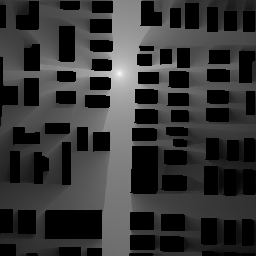} &
        \includegraphics[width=0.15\linewidth]{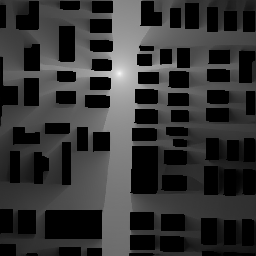} &
        \includegraphics[width=0.15\linewidth]{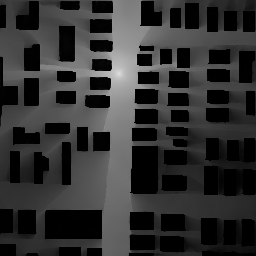} &
        \includegraphics[width=0.15\linewidth]{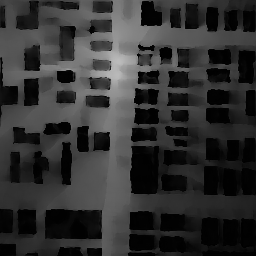} & 
        \includegraphics[width=0.15\linewidth]{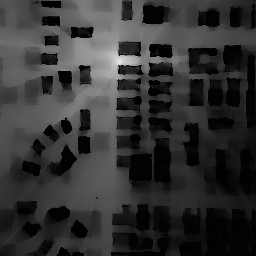} \\[2mm]
        
        \small Ground truth & \small $R_{\text{reuse}}=0.4$ & \small $R_{\text{reuse}}=0.7$ & \small $R_{\text{reuse}}=0.9$ & \small $R_{\text{reuse}}=0.95$ & \small $R_{\text{reuse}}=0.98$ \\
    \end{tabular}
    \caption{Visual comparison of RM generation under Scenario 3 (Static Environment Modification). Each row presents a distinct test case. The first column displays Ground Truth RMs. Subsequent columns illustrate RMs generated with varying trajectory reuse ratios ($R_{\text{reuse}}$).}
\label{fig:results_scenario3_6cols}
\end{figure*}

\subsection{Implementation Details}
\label{ssec:testing_protocol}
For robust statistical evaluation, 3000 independent trials were conducted using the test set for each configuration. In each trial, after setting initial and subsequent conditions, the final RM was generated and its performance evaluated against the ground truth of the second phase. The final results are reported as average metric scores over these trials. \added{All experiments were performed on a server equipped with an NVIDIA A40 GPU (48GB VRAM), running PyTorch version 2.2.0 with CUDA 11.8,} with the diffusion sampling process employing $T=100$ steps. Comprehensive test results are presented in Tables~\ref{tab:scenario1_results}, \ref{tab:scenario2_results}, and \ref{tab:scenario3_results}.

\added{Regarding the training cost, RadioDiff-Flux demonstrates significant computational efficiency due to its strategic reuse of pre-trained models. The entire second stage of our framework, which is responsible for the final radio map generation, is effectively training-free as it directly employs the identical architecture and pre-trained weights of the original RadioDiff model. Similarly, the VAE component in the first stage is also directly reused without modification. The only new training required is for the first-stage conditional diffusion model. Instead of training from scratch, we initialize this model with the weights from RadioDiff and fine-tune it for approximately 10 epochs. This fine-tuning process takes about 12 hours on our hardware, representing a dramatic reduction from the roughly 480 hours required to train the standard RadioDiff \cite{wang2024radiodiff} model from the ground up. This approach makes our method not only effective but also highly practical in terms of computational resources.}

\subsection{Result Analysis}
\label{sec:result_analysis}
This section provides a detailed analysis of the experimental results, evaluating the performance of our framework by examining the quantitative metrics and qualitative visual outcomes. The core of the analysis focuses on the impact of the reuse ratio, $R_{\text{reuse}}$, on RM construction accuracy and inference speed across the three defined scenarios.

\begin{table*}[t]
    \centering
    \caption{Quantitative performance for Scenario 1 (Changing Base Station Position). \added{The RadioDiff entry also serves as an ablation baseline, representing the performance of our architecture without the proposed midpoint reuse strategy.}}
    \label{tab:scenario1_results}
    \resizebox{0.6\linewidth}{!}{%
    \begin{tabular}{c | c | c c c c c}
    \toprule
    \textbf{Method} & \textbf{$R_{\text{reuse}}$} & \textbf{NMSE} & \textbf{RMSE} & \textbf{SSIM} & \textbf{PSNR (dB)} & \textbf{Time (ms)} \\
    \midrule
    RME-GAN & - & 0.01150 & 0.03030 & 0.93230 & 30.54000 & 42  \\
    UVM-Net & - & 0.00850 & 0.03040 & 0.93200 & 30.34000 & 95 \\
    RadioUnet & - & 0.00740 & 0.02440 & 0.95920 & 32.01000 & 60 \\
    \midrule
    RadioDiff & \added{0.00} & 0.00580 & 0.01990 & 0.96474 & 34.67248 & 600\\
    \midrule
    \multirow{11}{*}{Vanilla Midpoint Reuse (Ours)} & 0.10 & 0.00581 & 0.01991 & 0.96474 & 34.66587 & 532 \\
    & 0.20 & 0.00582 & 0.01995 & 0.96472 & 34.64317 & 477 \\
    & 0.30 & 0.00586 & 0.02001 & 0.96467 & 34.61296 & 416 \\
    & 0.40 & 0.00592 & 0.02013 & 0.96462 & 34.55543 & 356 \\
    & 0.50 & 0.00603 & 0.02034 & 0.96448 & 34.45523 & 301 \\
    & 0.60 & 0.00625 & 0.02075 & 0.96421 & 34.27014 & 236 \\
    & 0.70 & 0.00671 & 0.02159 & 0.96368 & 33.92364 & 173 \\
    & 0.80 & 0.00797 & 0.02369 & 0.96234 & 33.13318 & 120 \\
    & 0.90 & 0.01542 & 0.03297 & 0.95571 & 30.47665 & 63 \\
    & 0.95 & 0.04271 & 0.05474 & 0.93620 & 26.33720 & 31 \\
    & 0.98 & 0.13098 & 0.09766 & 0.88361 & 21.20604 & 12 \\
    \bottomrule
    \end{tabular}
    }
\end{table*}

\begin{table}[htbp]
\centering
\captionsetup{font={small}}
\caption{Quantitative performance of RadioDiff-Flux in Scenario 1 at high $R_{\text{reuse}}$ values.}
\label{tab:scenario1_latent_averaging_results}
\resizebox{0.8\linewidth}{!}{%
\begin{tabular}{@{}c | c c c c @{}}
\toprule
$R_{\text{reuse}}$ & NMSE      & RMSE      & SSIM      & PSNR (dB) \\ \midrule
0.98 & 0.02957 & 0.04567 & 0.94584 & 27.52674 \\
0.95 & 0.01292 & 0.02968 & 0.95849 & 31.28677 \\
0.90 & 0.00832 & 0.02381 & 0.96261 & 33.18820 \\
0.80 & 0.00655 & 0.02114 & 0.96439 & 34.23921 \\
0.70 & 0.00607 & 0.02035 & 0.96487 & 34.59251 \\ \bottomrule
\end{tabular}
}
\end{table}

\begin{table*}[t]
    \centering
    \caption{Quantitative performance for Scenario 2 (Transitioning from Static to Dynamic Environment). \added{The RadioDiff entry also serves as an ablation baseline, representing the performance of our architecture without the proposed midpoint reuse strategy.}}
    \label{tab:scenario2_results}
    \resizebox{0.6\linewidth}{!}{%
    \begin{tabular}{c | c | c c c c c}
    \toprule
    \textbf{Method} & \textbf{$R_{\text{reuse}}$} & \textbf{NMSE} & \textbf{RMSE} & \textbf{SSIM} & \textbf{PSNR (dB)} & \textbf{Time (ms)} \\
    \midrule
    RME-GAN & - & 0.01180 & 0.03070 & 0.92190 & 30.40000 & 42  \\
    UVM-Net & - & 0.00880 & 0.03010 & 0.93260 & 30.42000 & 95 \\
    RadioUNet & - & 0.00890 & 0.02580 & 0.94100 & 31.75000 & 60 \\
    \midrule
    RadioDiff & \added{0.00} & 0.00643 & 0.02239 & 0.95325 & 33.22775 & 600\\ \midrule
    \multirow{11}{*}{Vanilla Midpoint Reuse (Ours)}
    & 0.10 & 0.00643 & 0.02238 & 0.95317 & 33.23308 & 546 \\
    & 0.20 & 0.00642 & 0.02236 & 0.95313 & 33.23967 & 461 \\
    & 0.30 & 0.00640 & 0.02233 & 0.95312 & 33.24747 & 396 \\
    & 0.40 & 0.00640 & 0.02233 & 0.95307 & 33.24915 & 348 \\
    & 0.50 & 0.00639 & 0.02234 & 0.95299 & 33.23908 & 301 \\
    & 0.60 & 0.00642 & 0.02238 & 0.95294 & 33.22383 & 224 \\
    & 0.70 & 0.00646 & 0.02247 & 0.95293 & 33.18599 & 171 \\
    & 0.80 & 0.00659 & 0.02271 & 0.95278 & 33.09520 & 125 \\
    & 0.90 & 0.00688 & 0.02325 & 0.95247 & 32.88742 & 57 \\
    & 0.95 & 0.00732 & 0.02401 & 0.95170 & 32.61643 & 28 \\
    & 0.98 & 0.00776 & 0.02474 & 0.95094 & 32.37078 & 10 \\
    \bottomrule
    \end{tabular}
    }
\end{table*}
    
\begin{table*}[t]
    \centering
    \caption{Quantitative performance of Vanilla Midpoint Reuse for Scenario 3 (Directly Modifying the Static Environment).}
    \label{tab:scenario3_results}
    \resizebox{0.4\linewidth}{!}{%
    \begin{tabular}{c | c c c c c}
    \toprule
    \textbf{$R_{\text{reuse}}$} & \textbf{NMSE} & \textbf{RMSE} & \textbf{SSIM} & \textbf{PSNR (dB)} & \textbf{Time (ms)} \\
    \midrule
    0 & 0.00680 & 0.02092 & 0.96152 & 34.22417 & 600 \\
    0.10 & 0.00681 & 0.02093 & 0.96152 & 34.21530 & 537 \\
    0.20 & 0.00685 & 0.02099 & 0.96145 & 34.17983 & 472 \\
    0.30 & 0.00692 & 0.02112 & 0.96135 & 34.11101 & 412 \\
    0.40 & 0.00705 & 0.02138 & 0.96114 & 33.98104 & 359 \\
    0.50 & 0.00729 & 0.02184 & 0.96076 & 33.75646 & 296 \\
    0.60 & 0.00778 & 0.02273 & 0.95995 & 33.36347 & 239 \\
    0.70 & 0.00889 & 0.02458 & 0.95795 & 32.64405 & 175 \\
    0.80 & 0.01267 & 0.02986 & 0.94886 & 30.96165 & 117 \\
    0.90 & 0.04694 & 0.05926 & 0.87976 & 25.09397 & 58 \\
    0.95 & 0.19226 & 0.12455 & 0.66350 & 18.40915 & 28 \\
    0.98 & 0.58418 & 0.22358 & 0.37236 & 13.11929 & 11 \\
    \bottomrule
    \end{tabular}
    }
\end{table*}

\subsubsection{Scenario 1}
When only the BS position changes, the midpoint reuse strategy demonstrates remarkable efficacy at low to moderate reuse ratios. As shown in Table~\ref{tab:scenario1_results}, increasing $R_{\text{reuse}}$ up to 0.7 results in only a marginal increase in NMSE, accompanied by substantial gains in inference speed (e.g., a 3.47$\times$ speedup at $R_{\text{reuse}}=0.7$). Fig.~\ref{fig:results_scenario1_6cols} confirms that for $R_{\text{reuse}}$ up to 0.8, the generated RMs maintain high visual similarity to the ground truth. At very high reuse ratios ($R_{\text{reuse}} \geq 0.9$), accuracy degrades more noticeably, though speedups are significant. This indicates that an insufficient number of denoising steps remain for the model to adapt the latent representation to the new BS position. For this scenario, an $R_{\text{reuse}}$ between 0.7 and 0.8 offers an optimal balance between speed and accuracy.

To mitigate the 'blurred superposition' effect at high reuse ratios, we applied our RadioDiff-Flux method. As shown in Table~\ref{tab:scenario1_latent_averaging_results} and Fig.~\ref{fig:results_scenario1_latent_averaging}, this approach significantly improves performance. At $R_{\text{reuse}}=0.98$, NMSE is reduced from 0.13098 to 0.02957. This demonstrates that creating a generalized midpoint makes the model less susceptible to the influence of a single initial condition, offering a valuable refinement for scenarios requiring rapid updates with better fidelity.

\begin{figure}[htbp]
    \centering
    \captionsetup{font={small}}
    \begin{tabular}{@{}c@{\hspace{1mm}}c@{\hspace{1mm}}c@{}}
        \includegraphics[width=0.3\linewidth]{figure/scenario_resu/resu_s1/13_1_0.02_GT.png} & 
        \includegraphics[width=0.3\linewidth]{figure/scenario_resu/resu_s1/13_1_0.02.png} & 
        \includegraphics[width=0.3\linewidth]{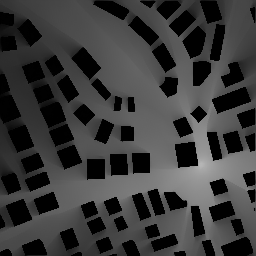} \\[0.5mm] 
        
        \includegraphics[width=0.3\linewidth]{figure/scenario_resu/resu_s1/418_0_0.02_GT.png} &
        \includegraphics[width=0.3\linewidth]{figure/scenario_resu/resu_s1/418_0_0.02.png} & 
        \includegraphics[width=0.3\linewidth]{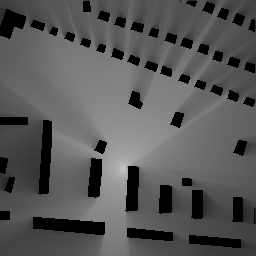} \\[0.5mm] 

        \includegraphics[width=0.3\linewidth]{figure/scenario_resu/resu_s1/520_1_0.02_GT.png} &
        \includegraphics[width=0.3\linewidth]{figure/scenario_resu/resu_s1/520_1_0.02.png} & 
        \includegraphics[width=0.3\linewidth]{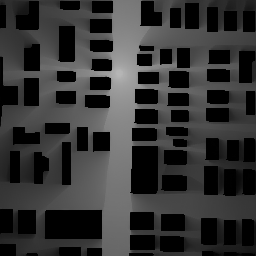} \\[2mm] 
        
        \small Ground Truth & \small Vanilla Midpoint Reuse & \small RadioDiff-Flux \\
    \end{tabular}
    \caption{Visual comparison for Scenario 1 at $R_{\text{reuse}}=0.98$: Ground Truth (left), original midpoint reuse (middle), and RadioDiff-Flux (right). Images in the rightmost column are placeholders.}
\label{fig:results_scenario1_latent_averaging}
\end{figure}

\subsubsection{Scenario 2}
This scenario reveals exceptional robustness, even at very high reuse ratios. The initial steps, using the static model, establish a strong foundation, which is then efficiently updated by the dynamic model. Quantitatively, as seen in Table~\ref{tab:scenario2_results}, accuracy remains remarkably high across all reuse ratios. Even at $R_{\text{reuse}}=0.98$, NMSE is a mere 0.00776, with a significant speedup of 58.07$\times$.

However, the introduction of vehicles adds high-frequency details. As observed in Fig.~\ref{fig:results_scenario2_6cols}, these dynamic details diminish with increasing $R_{\text{reuse}}$. While NMSE and RMSE signify excellent global accuracy, the SSIM metric, which is sensitive to structural information, better captures this loss of fine detail through its consistent, albeit small, decline. This scenario underscores the framework's capability for massive acceleration with minimal global error when adding dynamic elements, though very high reuse might trade off fine, dynamic features.

\subsubsection{Scenario 3}
When the static environment itself is modified, the impact of reuse is more critical. For $R_{\text{reuse}}$ up to 0.7, the generated RMs still largely reflect the target conditions, achieving a significant speedup (3.50$\times$) with reasonable accuracy (Table~\ref{tab:scenario3_results}). However, performance is less robust than in the other scenarios, as the change in underlying static features is more substantial.

\added{At high reuse ratios ($R_{\text{reuse}} \ge 0.8$), performance deteriorates sharply, as seen in Fig. 6. The generated RMs often retain strong, erroneous features from the initial environment. This highlights a key limitation of the midpoint reuse strategy: when the fundamental static layout changes significantly, the initial latent representation creates a strong ``inertial bias" that the model cannot overcome in the few remaining denoising steps. This clarifies the boundary of our method's effectiveness, indicating that for drastic environmental changes, a lower reuse ratio or a full regeneration from noise is necessary to ensure accuracy.}

\subsubsection{Overall Discussion and Concluding Remarks}
The experimental results consistently demonstrate a trade-off between the reuse ratio and RM accuracy, with sensitivity varying by the nature of the environmental change. Our framework's midpoint reuse is most effective when the semantic similarity is high between the initial and target conditions. Scenario 2 shows the highest robustness, achieving massive speedups with excellent global accuracy. Scenario 1 also performs well, especially with the RadioDiff-Flux refinement for high reuse ratios. Scenario 3 is the most challenging, where only moderate reuse offers a good balance. These findings validate that reusing intermediate diffusion states is a highly effective strategy for accelerating RM generation. This suggests adaptive $R_{\text{reuse}}$ strategies: high reuse for minor perturbations and conservative reuse for substantial reconfigurations. Our framework can accelerate RM generation by 3.5$\times$ to over 58$\times$ while maintaining high fidelity in many practical cases, showcasing its potential for near real-time RM updates crucial for dynamic wireless environments. \added{It is worth noting that while our experiments were conducted on a high-performance GPU, the significant relative speedup achieved is a key enabler for deployment on resource-constrained edge devices. The ability to reduce inference time by an order of magnitude makes near real-time RM adaptation feasible even on less powerful hardware, a crucial step toward practical implementation in mobile 6G systems.}

\added{These findings suggest the potential for an adaptive $R_{\text{reuse}}$ strategy in practical deployments: a high reuse ratio could be employed for minor perturbations, such as small BS movements or dynamic obstacle changes, while a more conservative ratio would be appropriate for substantial environmental reconfigurations. Developing a lightweight mechanism to quantify the magnitude of change between scenarios to automatically select an optimal $R_{\text{reuse}}$ remains a valuable direction for future work.}

\section{Conclusion}
In this paper, we have proposed RadioDiff-Flux, a novel framework for efficient RM construction by innovatively reusing trajectory midpoints within a generative denoising diffusion model. Theoretical analysis and experiments confirm that RadioDiff-Flux can substantially reduce inference latency, while maintaining high RM fidelity. Therefore, RadioDiff-Flux should offer a vital step towards real-time, high-accuracy RM generation, addressing a key bottleneck for adaptive wireless network management in 6G. For future work, we will explore adaptive midpoint reuse strategies based on environmental changes and enhancing temporal consistency for sequential RM generation.

\bibliography{ref}
\bibliographystyle{IEEEtran}

\ifCLASSOPTIONcaptionsoff
  \newpage
\fi

\end{document}